\ificcvfinal\pagestyle{empty}\fi
\begin{document}

\title{Learning Motion in Feature Space: Locally-Consistent Deformable Convolution Networks for Fine-Grained Action Detection}

\author{
    Khoi-Nguyen C. Mac$^1$, Dhiraj Joshi$^2$, Raymond A. Yeh$^1$, Jinjun Xiong$^2$, Rogerio S. Feris$^2$, Minh N. Do$^1$\\
	$^1$University of Illinois at Urbana-Champaign, $^2$IBM Research AI \\
	$^1${\tt\small \{knmac, yeh17, minhdo\}@illinois.edu}, $^2${\tt\small \{djoshi, jinjun, rsferis\}@us.ibm.com}
}

\maketitle

\begin{abstract}
Fine-grained action detection is an important task with numerous applications in robotics and human-computer interaction. Existing methods typically utilize a two-stage approach including extraction of local spatio-temporal features followed by temporal modeling to capture long-term dependencies. While most recent papers have focused on the latter (long-temporal modeling), here, we focus on producing features capable of modeling fine-grained motion more efficiently. We propose a novel locally-consistent deformable convolution, which utilizes the change in receptive fields and enforces a local coherency constraint to capture motion information effectively. Our model jointly learns spatio-temporal features (instead of using independent spatial and temporal streams). The temporal component is learned from the feature space instead of pixel space, \textit{e.g.} optical flow. The produced features can be flexibly used in conjunction with other long-temporal modeling networks, \textit{e.g.} ST-CNN, DilatedTCN, and ED-TCN. Overall, our proposed approach robustly outperforms the original long-temporal models on two fine-grained action datasets: 50 Salads and GTEA, achieving F1 scores of 80.22\% and 75.39\% respectively. Source code is available at: \url{https://github.com/knmac/LCDC_release}.
\end{abstract}

\section{Introduction}
\label{sec:intro}

\begin{figure}
	\centering
	\begin{subfigure}[t]{0.45\linewidth}
		\includegraphics[width=\linewidth]{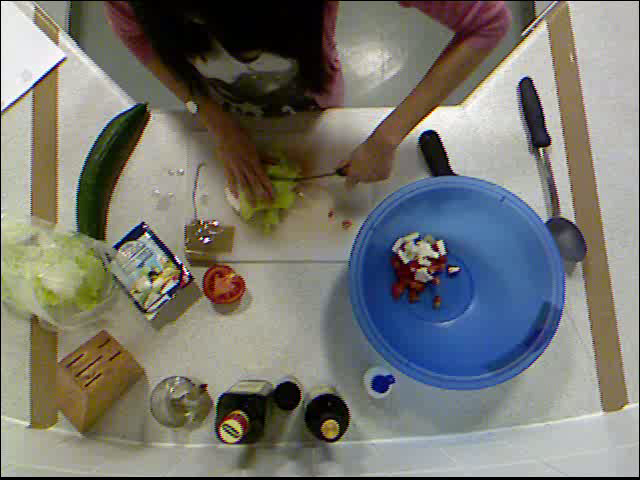}
		\caption{frame at time $t$-1.}
		\label{sfig:compare_optical_flow_f1}
	\end{subfigure}
	\begin{subfigure}[t]{0.45\linewidth}
		\includegraphics[width=\linewidth]{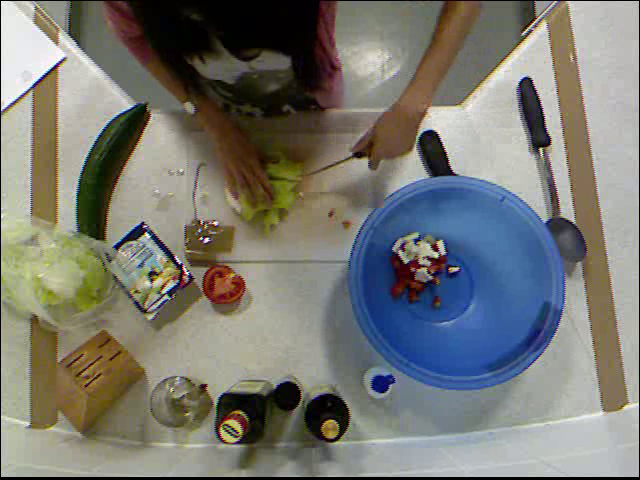}
		\caption{frame at time $t$.}
		\label{sfig:compare_optical_flow_f2}
	\end{subfigure}
	\\
	\begin{subfigure}[t]{0.45\linewidth}
		\includegraphics[width=\linewidth]{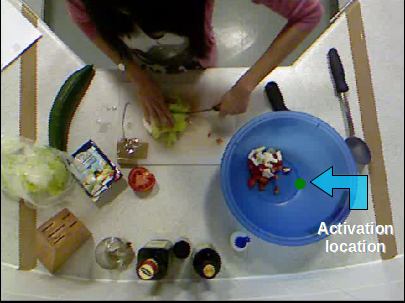}
		\caption{no motion vectors found on the background region.}
		\label{sfig:compare_optical_flow_background}
	\end{subfigure}
	\begin{subfigure}[t]{0.45\linewidth}	
		\includegraphics[width=\linewidth]{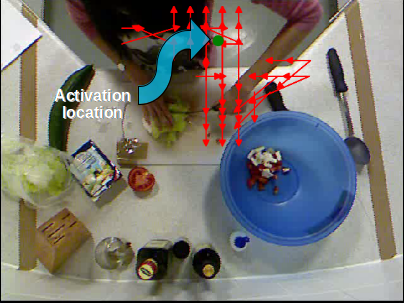}
		\caption{motion vectors found on the moving region.}
		\label{sfig:compare_optical_flow_moving}
	\end{subfigure}
	\\
	\begin{subfigure}[t]{0.45\linewidth}
		\frame{\includegraphics[width=\linewidth]{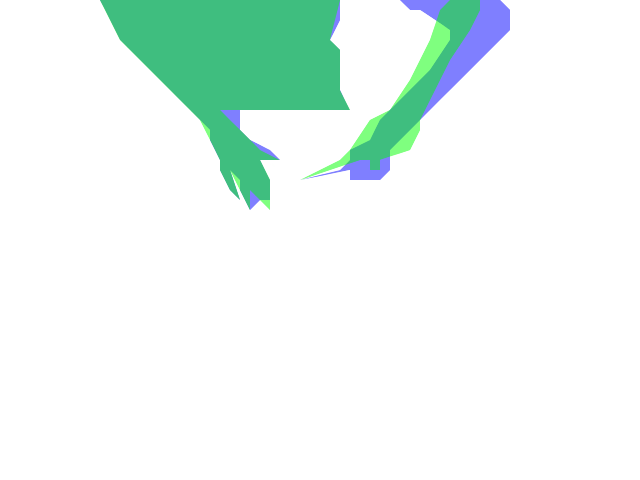}}
		\caption{the person at time $t$-1 (blue) and $t$ (green).}
		\label{sfig:compare_optical_flow_move}
	\end{subfigure}
	\begin{subfigure}[t]{0.45\linewidth}
		\includegraphics[width=\linewidth]{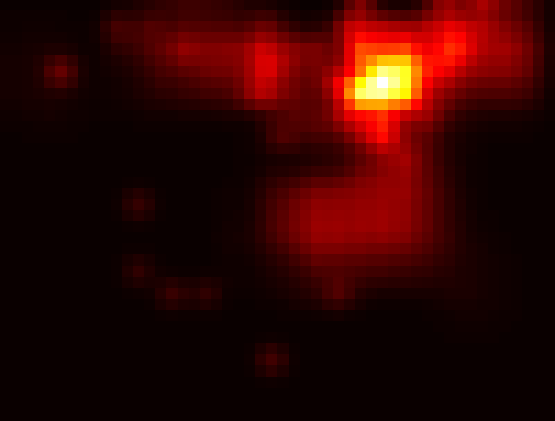}
		\caption{visualization of motion in feature space.}
		\label{sfig:compare_optical_flow_energy}
	\end{subfigure}
    \vspace{-2mm}
	\caption{Visualization of difference of adaptive receptive fields for action \textit{cutting lettuce} in 50 Salads dataset: (a) and (b) are two consecutive frames; (c) and (d) are motion vectors at background and moving regions (green dots indicate activation locations and red arrows indicate motion vectors); (e) is the manually defined mask of the person at time $t-1$ and $t$; and (f) is the energy of motion field in feature space, computed by aggregating motion vectors in all deformable convolution layers.}
	\label{fig:compare_optical_flow}
\end{figure}

\textit{Action detection}, \textit{a.k.a} action segmentation, addresses the task of classifying every frame of a given video, containing multiple action segments, as one out of a fixed number of defined categories, including a category for unknown actions. This is contrary to the simpler task of \textit{action recognition}, wherein a given video is pre-segmented and guaranteed to be one of the provided action classes \cite{kang2016review}. 

Fine-grained actions are a special class of actions which can only be differentiated by subtle differences in motion patterns. Such actions are characterized by high inter-class similarity \cite{rohrbach2012mpii, singh2016merl}, \ie it is difficult, even for humans, to distinguish two different actions just from observing individual frames. Unlike generic action detection, which can largely rely on ``what'' is in a video frame to perform detection, fine-grained action detection requires additional reason about ``how'' the objects move across several video frames.
In this work, we consider the fine-grained action detection setting. 

The pipeline of fine-grained action detection generally consists of two steps: (1) spatio-temporal feature extraction and (2) long-temporal modeling. The \textbf{first step} models spatial and short-term temporal information by looking at a few consecutive frames. Traditional approaches tackle this problem by decoupling spatial and temporal information in different feature extractors and then combining the two streams with a fusion module. Optical flow is commonly used for such short-term temporal modeling \cite{feichtenhofer2016spatialtemporal, feichtenhofer2016twostream, simonyan14twostream, singh2016merl, singh2016egonet}. However, optical flow is usually computationally expensive and may suffer from noise introduced by data compression \cite{lea2017tempconvnet, lea2016segmental}. Other approaches use Improved Dense Trajectory (IDT) or Motion History Image (MHI) as an alternative to optical flow \cite{bobick2001representation, lea2016segmental, wang2013idt}. Recently, there have been efforts to model motion in video using variants of 3D convolutions \cite{carreira2017i3d, kang2016tubelets, tran2015c3d}. In such cases, motion modeling is somewhat limited by receptive fields of standard convolutional filters \cite{holschneider1990atrous, yu2016dilated, yu2017dilatednet}.

The \textbf{second step} models long-term dependency of extracted spatio-temporal features over the whole video, \eg bi-directional LSTM \cite{singh2016merl}, spatial-temporal CNN (ST-CNN) with segmentation models \cite{lea2016segmental}, temporal convolutional networks (TCN) \cite{lea2017tempconvnet}, and temporal deformable residual networks (TDRN) \cite{lei2018temporal}. Recent works that focused on modeling long-term dependency have usually relied on existing features \cite{lea2017tempconvnet, lea2016segmental, lei2018temporal}. In this work, we create efficient short-term spatio-temporal features which are very effective in modeling fine-grained motion.

Instead of modeling temporal information with optical flow, we learn temporal information in the \textit{feature space}. This is accomplished by utilizing our proposed \textit{locally-consistent deformable convolution (LCDC)}, which is an extension of the standard deformable convolution \cite{dai17dcn}. At a high-level, we model motion by evaluating the local movements in adaptive receptive fields over time (as illustrated in \figref{fig:compare_optical_flow}). Adaptive receptive fields can focus on important parts \cite{dai17dcn} in a frame, thus using them helps focus on movements of interesting regions. On the other hand, traditional optical flow tracks all possible motion, some of which may not be necessary. Furthermore, we enforce a local coherency constraint over the adaptive receptive fields to achieve temporal consistency.

To demonstrate the effectiveness of our approach, we evaluate on two standard fine-grained action detection datasets: 50 Salads \cite{stein2013salads} and Georgia Tech Egocentric Activities (GTEA) \cite{fathi2011gtea}. We also show that our features, without any optical flow guidance, are robust and outperform features from original networks. Additionally, we perform quantitative evaluation of the learned motion using ablation studies to demonstrate the power of our model in capturing temporal information.

Our main contributions are:
\textit{(1) Modeling motion in feature space} using changes in adaptive receptive fields over time, instead of relying on pixel space as in traditional optical flow based methods. To the best of our knowledge, we are the first to extract temporal information from receptive fields. 
\textit{(2) Introducing local coherency constraint} to enforce consistency in motion. The constraint reduces redundant model parameters, making motion modeling more robust. 
\textit{(3) Constructing a backbone single-stream network to jointly learn spatio-temporal features}. This backbone network is flexible and can be used in consonance with other long-temporal models. Furthermore, we prove that the network is capable of representing temporal information 
with a behavior equivalent to optical flow.
\textit{(4) Significant reduction of model complexity} is achieved without sacrificing performance by using local coherency constraint. This reduction is proportional to the number of deformable convolution layers. Our single-stream approach is computationally more efficient than traditional two-stream networks, as they require expensive optical flow and multi-stream inference.
\section{Related work}
\label{sec:related_work}

An extensive body of literature exists for features, temporal modeling, and network architectures within the context of action detection. In this section, we will review the most recent and relevant papers related to our approach.

\noindent\textbf{Spatio-temporal features.} Spatio-temporal features are crucial in the field of video analysis. Usually, the features consist of spatial cues (extracted from RGB frames) and temporal cues over a \textit{short} period of time. Optical flow \cite{lucas1981opticalflow} is often used to model temporal information. However, it was found to suffer from noise due to video compression and insufficient to capture small motion \cite{lea2017tempconvnet, lea2016segmental}. It is also generally computationally expensive. Other solutions to model temporal information include Motion History Image (MHI) \cite{bobick2001representation}, leveraging the difference of multiple consecutive frames, and Improved Dense Trajectory (IDT) \cite{wang2013idt}, combining HOG \cite{dalal2005hog}, HOF \cite{wang2013idt}, and Motion Boundary Histograms (MBH) descriptors \cite{dalal2006mbh}.

To combine spatial and (short) temporal components, Lea \etal \cite{lea2016segmental} stacked an RGB frame with MHI as input to a VGG-like network to produce features (which they refereed to as SpatialCNN features). Simonyan and Zisserman \cite{simonyan14twostream} proposed a two-stream network, combining scores from separate appearance (RGB) and motion streams (stacked optical flows). The original approach was improved by more advanced fusion in \cite{feichtenhofer2016spatialtemporal, feichtenhofer2016twostream}. A different school of thought models motion using variants of 3D convolutions including C3D proposed in \cite{tran2015c3d}. Inflated 3D (I3D) network, leveraging 3D convolutions within a two-stream setup was proposed in \cite{carreira2017i3d}. To cope with egocentric motion captured by head-mounted cameras, Singh \etal introduced a third stream (EgoStream) in \cite{singh2016egonet}, capturing the relation of hands, head, and eyes motion. \cite{singh2016merl} further used four streams (two appearance and two motion streams) in Multi-Stream Network (MSN). Each domain (spatial and temporal) has a global view (whole frame) and a local view (cropped by motion tracker).

\noindent\textbf{Long-temporal modeling.} While spatio-temporal features are usually extracted over short periods of time, some form of long-temporal modeling is performed to capture long-term dependencies within the entirety of a video containing an action sequence. In \cite{lea2017tempconvnet} Spatio-temporal CNN (ST-CNN) was introduced to combine SpatialCNN features using a 1D convolution that spans over a long period of time. Singh \etal learned the long-term dependency from MSN features (four-stream) using bi-directional LSTMs \cite{singh2016merl}. More recently, \cite{lea2017tempconvnet} proposed two Temporal Convolution Networks (TCN): DilatedTCN and Encoder-Decoder TCN (ED-TCN). These networks fused SpatialCNN features and captured long-temporal patterns by convolving them in the time-domain. A Temporal Deformable Residual Networks (TDRN) was proposed in \cite{lei2018temporal} to model long-temporal information by applying a deformable convolution in the time domain. The TCN model was also further improved with multi stage mechanism in Multi-Stage TCN (MS-TCN) \cite{farha2019mstcn}.

\noindent\textbf{Network architectures.} Pre-trained architectures for image classification, such as VGG, Inception, ResNet \cite{he2016resnet, simonyan2014vgg, szegedy2015inception} are the most important determinants of the performance of the main down-stream vision tasks. Many papers have focused on improving the recognition accuracy by innovating on the network architecture. 
In standard convolutions, the convolutional response always comes from a local region. Dilated convolutions have been introduced to overcome this problem by changing the shape of receptive fields with some dilation patterns \cite{holschneider1990atrous, yu2016dilated, yu2017dilatednet}. In 2017, Dai \etal. \cite{dai17dcn} introduced deformable convolutional networks with adaptive receptive fields. The method is more flexible since the receptive fields depend on input and can approximate an arbitrary object's shape. We leverage on the advances of \cite{dai17dcn}, specifically the adaptive receptive fields from the model to capture motion in the \textit{feature space}. We further add a local coherency constraint on receptive fields in order to ensure that the motion fields are consistent. This constraint also plays a major role in reducing model complexity.
\section{Locally-Consistent Deformable Convolution Networks}
\label{sec:method}

\begin{figure}
	\centering
	\includegraphics[width=.9\linewidth]{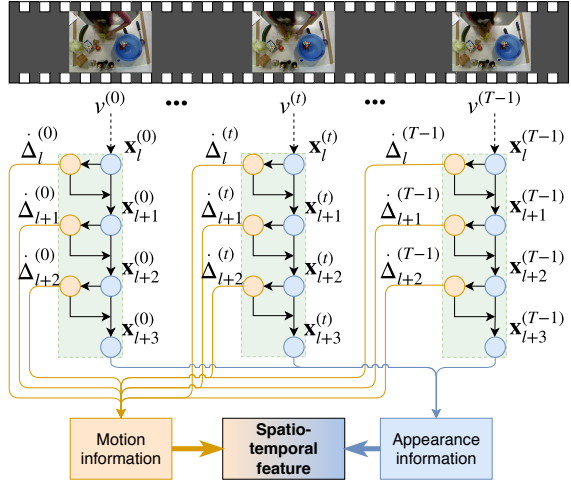}
    \vspace{-2mm}
	\caption{Network architecture of the proposed LCDC across multiple frames $v^{(t)}$. Appearance information comes from the last layer while motion information is extracted directly from deformation $\dot{\Delta}$ in the feature space instead of from a separate optical flow stream. Weights are shared across frames over time.}
	\label{fig:archi_overview}
\end{figure}

Our architecture builds upon deformable convolutional networks with an underlying ResNet CNN. While a deformable convolutional network has been shown to succeed in the task of object detection and semantic segmentation, it is not directly designed for fine-grained action detection. However, we observe that deformable convolution layers have a byproduct, the \textit{adaptive receptive field}, which can capture motion very naturally.

At a high level, an adaptive receptive field in a deformable convolution layer can be viewed as an aggregation of important pixels, as the network has the flexibility to change where each convolution samples from. In a way, the adaptive receptive fields are performing some form of key-points detection. Therefore, our hypothesis is that, if the key-points are consistent across frames, we can model motion by taking the difference in the adaptive receptive fields across time. As a deformable convolution can be trained end-to-end, our network can learn to model motion at hidden layers of the network. Combining this with spatial features leads to a powerful spatio-temporal feature.

We illustrate the intuition of our method in \figref{fig:compare_optical_flow}. The motion here is computed using difference in adaptive receptive fields on multiple \textit{feature spaces} instead of pixel space as in optical flow. Two consecutive frames of action \textit{cutting lettuce} from 50 Salads dataset are shown in \figref{sfig:compare_optical_flow_f1} and \figref{sfig:compare_optical_flow_f2}. \figref{sfig:compare_optical_flow_move} shows masks of the person to illustrate how the action takes place. We also show the motion vectors corresponding to different regions in \figref{sfig:compare_optical_flow_background} and \figref{sfig:compare_optical_flow_moving}. Red arrows are used to describe the motion and green dots are used to show the corresponding activation units. We suppress motion vectors with low values for the sake of visualization. In \figref{sfig:compare_optical_flow_background}, the activation unit lies on a background region (cut ingredients inside the bowl) and so there is no motion recorded as the difference between two adaptive receptive fields of background region over time is minimal. However, we can find motion in \figref{sfig:compare_optical_flow_moving} (the field of red arrows) because the activation unit lies on a moving region, \ie the arm region. The motion field at all activation units is seen in \figref{sfig:compare_optical_flow_energy}, where the field's energy corresponds to the length of motion vectors at each location. The motion field is excited around the moving region (the arm) while suppressed in the background. Therefore, this highly suggests that the motion information we extract can be used as an alternative solution to optical flow. A schematic of the proposed network architecture is shown in \figref{fig:archi_overview}.

\begin{figure}
    \centering
    \includegraphics[width=\linewidth]{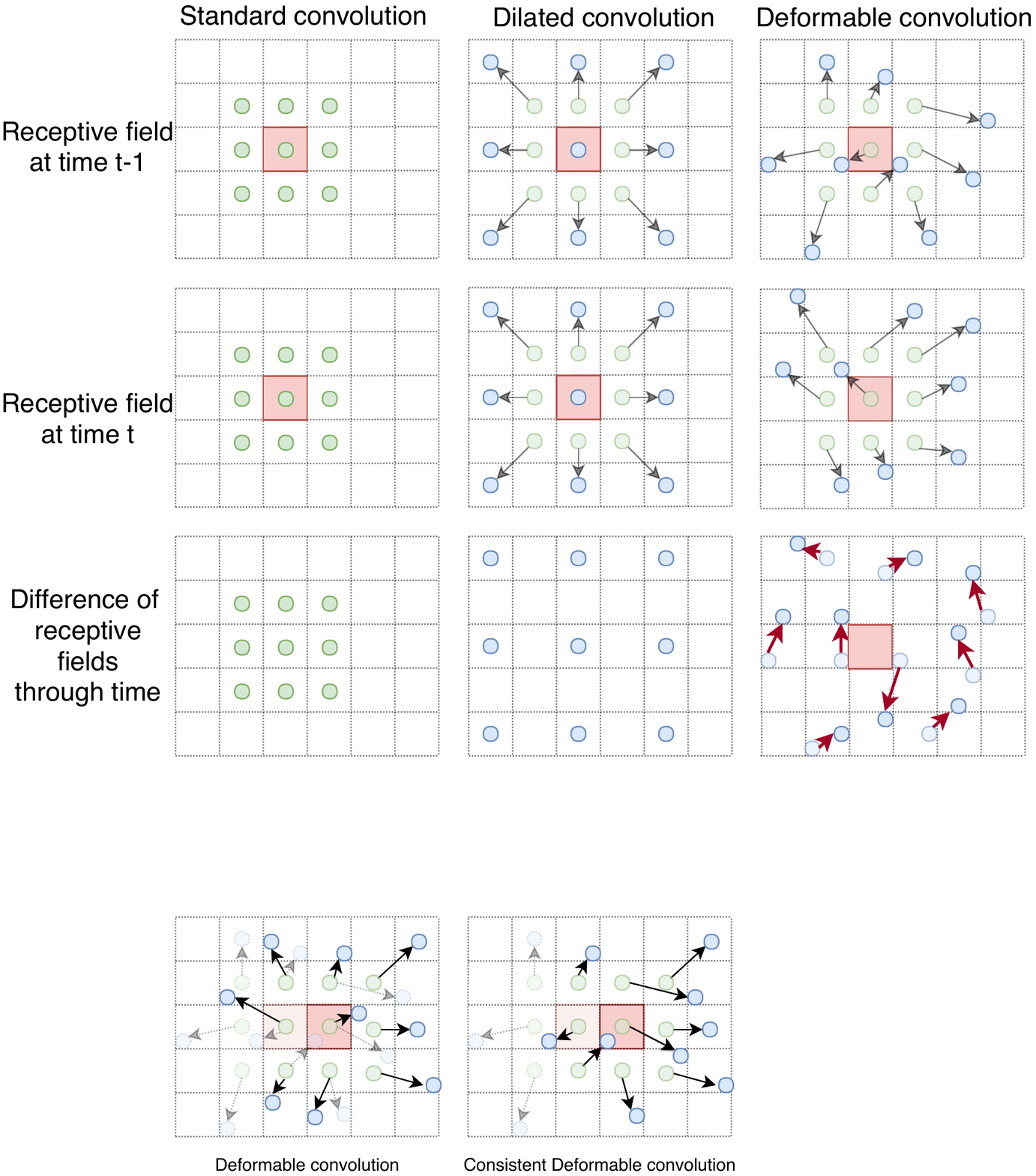}
    \vspace{-2mm}
    \caption{Illustration of temporal information modeled by the difference of receptive fields at a single location in 2D. Only deformable convolution can capture temporal information (shown with red arrows). Related to \equref{equ:deconv} and \equref{equ:diff_receptive_field}, $n$ is red square, $n$+$k$ are green dots, $\ddot{\Delta}_{n,k}$ are black arrows, $n$+$k$+$\ddot{\Delta}_{n,k}$ are blue dots, and $\ddot{\br}$ are red arrows.}
    \label{fig:receptive_fields}
\end{figure}

\subsection{Deformable convolution}
\label{ssec:dconv}

We first briefly review the deformable convolution layers, before going into a concrete description of the construction of the network architecture. Let $\bx$ be the input signal such that $\bx \in \R^N$. The standard convolution is defined as:
\begin{equation}
    \by[n] = \sum_k \bw[-k] \bx \left[ n+k \right],
\end{equation}
where $\bw \in \R^{K}$ is the convolutional kernel, $n$ and $k$ are the signal and kernel indices ($n$ and $k$ can be treated as multidimensional indices). The deformable convolution proposed in \cite{dai17dcn} is thus defined as:
\begin{equation}
    \by[n] = \sum_k \bw[-k] \bx \left( n + k + \ddot{\Delta}_{n, k} \right),
    \label{equ:deconv}
\end{equation}
where $\ddot{\Delta} \in \R^{N \times K}$ represents the deformation offsets of deformable convolution. These offsets are learned from another convolution with $\bx$ \ie $\ddot{\Delta}_{n,k} = (\bh_k * \bx)[n]$, where $\bh$ is a different kernel. Note that we use parentheses $(\cdot)$ instead of brackets $[\cdot]$ for $\bx$ in \equref{equ:deconv} because the index $n + k + \ddot{\Delta}_{n,k}$ requires interpolation as $\ddot{\Delta}$ is fractional.

\subsection{Modeling temporal information with adaptive receptive fields}
\label{ssec:adapt_recept}
We define the adaptive receptive field of a deformable convolution at time $t$ as $\ddot{\mathbf{F}}^{(t)} \in \R^{N \times K}$ where $\ddot{\mathbf{F}}^{(t)}_{n, k}= n + k + \ddot{\Delta}_{n,k}^{(t)}$. To extract motion information from adaptive receptive fields, we take the difference of the receptive fields through time, which we denote as:
\begin{equation}
	\ddot{\br}^{(t)} = \ddot{\mathbf{F}}^{(t)} - \ddot{\mathbf{F}}^{(t-1)} = \ddot{\Delta}^{(t)} - \ddot{\Delta}^{(t-1)}.
	\label{equ:diff_receptive_field}
\end{equation}
It can be seen that the locations $n+k$ are canceled, going from $t-1$ to $t$ in \equref{equ:diff_receptive_field}, leaving only the difference of deformation offsets. Given $T$ input feature maps with spatial dimension $H \times W$, we can construct $T$ different $\ddot{\Delta}^{(t)} \rvert_{t=0}^{T-1}$, resulting in $T-1$ motion fields $\ddot{\br}^{(t)} \rvert_{t=0}^{T-2}$ with the same spatial dimension. Therefore, we can model different motion at different positions $n$ and time $t$.

\figref{fig:receptive_fields} further illustrates the meaning of $\ddot{\br}^{(t)}$ in 2D for different types of convolutions. Red square shows the current activation location, green dots show the standard receptive fields, and blue dots show the receptive fields after adding deformation offsets. In the last row, red arrows show the changes of receptive field from time $t-1$ (faded blue dots) to time $t$ (solid blue dots). Readers should note that there are no red arrows for standard convolution and dilated convolution because the offsets are either zero or identical. Red arrows only appear in deformable convolution, which motivates modeling of temporal information.

\begin{figure}
	\centering
	\includegraphics[width=\linewidth]{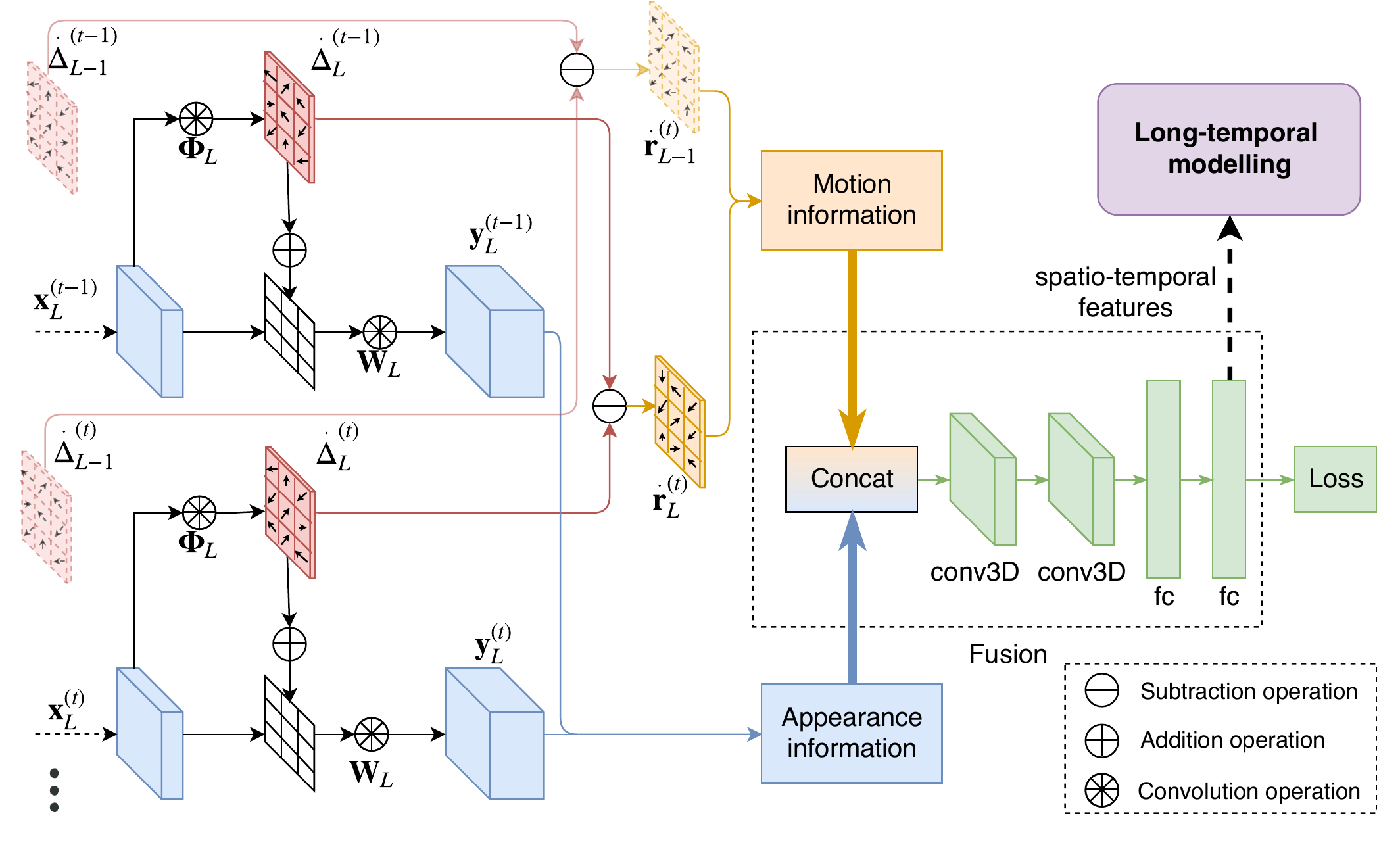}
    \vspace{-2mm}
	\caption{A more detailed view of our network architecture with the fusion module. Appearance information comes from output of the last layer while motion information comes from aggregating $\dot{\br}$ from multiple layers. Outputs of the final fc layer can be flexibly used as the features for any long-temporal modeling networks.}
	\label{fig:mrDeRF_archi}
\end{figure}

\subsection{Locally-consistent deformable convolution}
\label{ssec:dconv_motion_constraint}
Directly modeling motion using $\ddot{\br}$ is not very effective because there is no guarantee of local consistency in receptive fields in the original deformable convolution formulation. This is because $\ddot{\Delta}_{n,k}$ is defined on both location ($n$) and kernel ($k$) indices, which essentially corresponds to $\bx[m]$, where $m = n + k$. However, there are multiple ways to decompose $m$, \ie $m = n+k = (n-l) + (k+l)$, for any $l$. Therefore, one single $\bx[m]$ is deformed by multiple $\ddot{\Delta}_{n-l, k+l}$, with different $l$. This produces inconsistency when we model $\ddot{\br}^{(t)}$ in \equref{equ:diff_receptive_field}, as there can be multiple motion vectors corresponding to the same location. While local consistency could be learned as a side-effect of the training process, it is still not explicitly formulated in the original deformable convolution formulation.

In order to enforce consistency, we propose a locally-consistent deformable convolution (LCDC):
\begin{equation}
    \by[n] = \sum_k \bw[-k] \bx \left( n + k + \dot{\Delta}_{n+k} \right),
    \label{equ:cdc}
\end{equation}
for $\dot{\Delta} \in \R^N$. LCDC is a special case of deformable convolution where
\begin{equation}
    \ddot{\Delta}_{n,k} = \dot{\Delta}_{n+k}, \quad \forall n, k.
    \label{equ:cdc_constraint}
\end{equation}
We name this as \textit{local coherency constraint}. The interpretation of LCDC is that instead of deforming the receptive field as in \equref{equ:deconv}, we can deform the input signal instead. Specifically, LCDC in \equref{equ:cdc} can be rewritten as:
\begin{equation}
    \by[n] = \sum_k \bw[-k] \tilde{\bx}[n+k] = (\tilde{\bx}*\bw)[n],
\end{equation}
where
\begin{equation}
    \tilde{\bx}[n] = (D_{\dot{\Delta}} \{\bx\})[n] = \bx\left( n + \dot{\Delta}_n \right)
\end{equation}
is a deformed version of $\bx$ and $*$ is the standard convolution ($D_{\dot{\Delta}} \{\cdot\}$ is defined as the deforming operation by offset $\dot{\Delta}$).

Both $\ddot{\Delta}$ and $\dot{\Delta}$ are learned via a convolution layer. Recall that $\ddot{\Delta}_{n,k} = (\bh_k * \bx)[n]$, where $\bx \in \R^N$ and $\ddot{\Delta} \in \R^{N \times K}$. $\dot{\Delta}$ is constructed similarly, \ie 
\begin{equation}
    \dot{\Delta}_n = (\Phi * \bx)[n],
    \label{equ:dot_delta}
\end{equation}
where $\dot{\Delta} \in \R^N$. Since $\ddot{\Delta}$ and $\dot{\Delta}$ share the same spatial dimension $N$ and they can be applied for different time frames, $\dot{\Delta}$ can also model motion at different positions and times.

Furthermore, $\dot{\Delta}$ only needs a kernel $\Phi$, while $\ddot{\Delta}$ requires multiple $\bh_k$. Therefore, LCDC is more memory-efficient as we can reduce memory cost $K$ times. Implementation-wise, given input feature map $\bx \in \R^{H \times W \times C}$, then $\ddot{\Delta} \in \R^{(H \times W) \times (G \times K_h \times K_w \times 2)}$, where $G$ is the number of deformable groups, $K_h$ and $K_w$ are the height and width of kernels, and 2 indicates that offsets are 2D vectors. However, the dimensionality of LCDC offsets $\dot{\Delta}$ is only $\R^{H \times W \times 2}$. We also drop the number of deformable groups $G$ since we want to model one single type of motion between two time frames. Therefore, the reduction in this case is $G \times K_h \times K_w$ times. The parameter reduction is proportional to the number of deformable convolution layers that are used.

We now show that LCDC can effectively model both appearance and motion information in a single network, as the difference $\dot{\br}^{(t)} = \dot{\Delta}^{(t)} - \dot{\Delta}^{(t-1)}$ has a behavior equivalent to motion information produced by optical flow.
\begin{proposition}
Suppose that two inputs $\bx^{(t-1)}$ and $\bx^{(t)}$ are related through a motion field, \ie
\begin{equation}
    \bx^{(t)}(s) = \bx^{(t-1)}\left( s - o(s) \right),
    \label{equ:optical_flow_relation}
\end{equation}
where $o(s)$ is the motion at location $s \in \R^2$, and $\bx^{(t)}$ is assumed to be locally varying. Then the corresponding LCDC outputs with $\bw \neq 0$:
\begin{align*}
    \by^{(t)} &= (D_{\dot{\Delta}^{(t)}} \{\bx^{(t)}\})*\bw, \\
    \by^{(t-1)} &= (D_{\dot{\Delta}^{(t-1)}} \{\bx^{(t-1)}\})*\bw
\end{align*}
are consistent, \textit{i.e.} $\by^{(t-1)} = \by^{(t)}$, if and only if $\;\forall n$,
\begin{equation}
    \dot{\br}^{(t)}_n = \dot{\Delta}_n^{(t)} - \dot{\Delta}_n^{(t-1)} = o\left(n + \dot{\Delta}_n^{(t)}\right).
    \label{equ:encoded_motion}
\end{equation}
Notice that in pixel space, $\bx$ are input images and $o(s)$ is the optical flow at $s$. In latent space, $\bx$ are intermediate feature maps and $o(s)$ is the motion of feature.
\end{proposition}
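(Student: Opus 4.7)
The plan is to expand both sides of the claimed equality $\by^{(t)} = \by^{(t-1)}$ directly from the LCDC definition and then rewrite $\bx^{(t)}$ in terms of $\bx^{(t-1)}$ using the motion relation in \equref{equ:optical_flow_relation}. Concretely, I would begin by writing
\begin{align*}
\by^{(t)}[n] &= \sum_k \bw[-k]\,\bx^{(t)}\!\left(n+k+\dot{\Delta}^{(t)}_{n+k}\right),\\
\by^{(t-1)}[n] &= \sum_k \bw[-k]\,\bx^{(t-1)}\!\left(n+k+\dot{\Delta}^{(t-1)}_{n+k}\right),
\end{align*}
and then use $\bx^{(t)}(s)=\bx^{(t-1)}(s-o(s))$ with $s=n+k+\dot{\Delta}^{(t)}_{n+k}$ to express $\by^{(t)}[n]$ as a convolution sum that samples $\bx^{(t-1)}$ at the shifted arguments $n+k+\dot{\Delta}^{(t)}_{n+k}-o\!\left(n+k+\dot{\Delta}^{(t)}_{n+k}\right)$. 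The entire proof then reduces to a matching-of-sample-locations question between the two sums over $\bx^{(t-1)}$.

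For the sufficiency direction (``if'' \equref{equ:encoded_motion} holds), I would substitute $m=n+k$ into \equref{equ:encoded_motion} to obtain $\dot{\Delta}^{(t)}_{n+k}-o(n+k+\dot{\Delta}^{(t)}_{n+k})=\dot{\Delta}^{(t-1)}_{n+k}$, and observe that this makes each sample in the rewritten $\by^{(t)}[n]$ sum coincide with the corresponding sample in $\by^{(t-1)}[n]$. Summing against $\bw[-k]$ then gives $\by^{(t)}=\by^{(t-1)}$ immediately; this direction is essentially bookkeeping.

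The harder ``only if'' direction is where the local variation hypothesis and $\bw\neq 0$ matter. From $\by^{(t)}[n]=\by^{(t-1)}[n]$ for every $n$, I would argue that if the two argument sequences $\{n+k+\dot{\Delta}^{(t)}_{n+k}-o(\cdot)\}_k$ and $\{n+k+\dot{\Delta}^{(t-1)}_{n+k}\}_k$ differed at some index $k_0$, then by the locally-varying assumption on $\bx^{(t-1)}$ (i.e.\ $\bx^{(t-1)}$ is not constant on any neighborhood containing the relevant pair of sample points) and the freedom to shift $n$ so that a single $k$ dominates, we can isolate that discrepancy and produce a nonzero residual weighted by $\bw[-k_0]\neq 0$. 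This contradicts the assumed equality and forces pointwise equality of the arguments, i.e.\ \equref{equ:encoded_motion} after relabeling $m=n+k$.

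The main obstacle I anticipate is making this last isolation step rigorous: one needs to convert ``sums of shifted samples agree for all $n$'' into ``each shifted sample agrees for every $n$ and $k$''. I would handle this by invoking the locally-varying assumption in the form that $\bx^{(t-1)}$ is injective (or at least distinguishing) on sufficiently small patches, so that two sampling locations which produce the same contribution across all $n$ must in fact coincide. Once this is set up, the remainder is a straightforward index substitution $m=n+k$ that yields \equref{equ:encoded_motion}. The final remark that $o$ coincides with optical flow in pixel space and with feature-space motion in the latent case follows simply by identifying what $\bx$ represents at the layer in question, and requires no further argument beyond the statement itself.
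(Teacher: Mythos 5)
Your overall strategy is the same as the paper's in spirit: expand the LCDC outputs, substitute the motion relation \equref{equ:optical_flow_relation} to express everything as samples of $\bx^{(t-1)}$, and then use the locally-varying assumption to convert equality of samples into equality of sampling locations, from which \equref{equ:encoded_motion} follows by the relabeling $m=n+k$. The sufficiency direction of your argument is fine and matches the paper's backward reading of its chain of equivalences.

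The genuine gap is in your ``only if'' direction. The paper does not try to match the two convolution sums term by term; it first cancels the convolution with $\bw$ entirely, reducing $\by^{(t)}=\by^{(t-1)}$ to the pointwise identity $D_{\dot{\Delta}^{(t)}}\{\bx^{(t)}\}=D_{\dot{\Delta}^{(t-1)}}\{\bx^{(t-1)}\}$, i.e.\ $\bx^{(t)}\left(n+\dot{\Delta}^{(t)}_n\right)=\bx^{(t-1)}\left(n+\dot{\Delta}^{(t-1)}_n\right)$ for all $n$. After that step there is exactly one sample per location $n$ and no sum over $k$, so the locally-varying assumption applies directly to equate the two arguments. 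Your route keeps the sum over $k$ and tries to isolate a discrepancy at a single index $k_0$ by ``shifting $n$ so that a single $k$ dominates.'' That maneuver does not work: for every $n$ the output is the full weighted sum over all $k$ with $\bw[-k]\neq 0$, and translating $n$ translates every term simultaneously, so there is no choice of $n$ that isolates one $k_0$. Moreover, $\bw\neq 0$ only means $\bw$ is not identically zero; it does not give you $\bw[-k_0]\neq 0$ at the particular index where the sampling locations disagree, so the ``nonzero residual'' you want to exhibit need not be nonzero. As written, your necessity argument does not go through.

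To be fair, the step you are struggling with is also the weakest step of the paper's own proof: cancelling $*\,\bw$ from both sides of $\tilde{\bx}^{(t)}*\bw=\tilde{\bx}^{(t-1)}*\bw$ requires convolution with $\bw$ to be injective on the relevant signal class, which is strictly stronger than $\bw\neq 0$ (a kernel with a vanishing frequency response annihilates nonzero signals). The paper asserts this equivalence in one line; you correctly sensed that something nontrivial is being claimed, but the patch you propose (term isolation plus injectivity of $\bx^{(t-1)}$ on small patches) does not repair it. If you want a clean writeup, follow the paper's factorization through the deformed signals and state the injectivity of $(\cdot)*\bw$ as an explicit hypothesis, then apply the locally-varying assumption once, at the level of the single argument $n+\dot{\Delta}^{(t)}_n-o\left(n+\dot{\Delta}^{(t)}_n\right)$ versus $n+\dot{\Delta}^{(t-1)}_n$.
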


\begin{proof}
With the connection of LCDC to standard convolution, under the assumption that $\bw \neq 0$, we have:
\begin{align*}
    \by^{(t)} &= \by^{(t-1)} \\
    \Leftrightarrow D_{\dot{\Delta}^{(t)}} \{\bx^{(t)}\} &= D_{\dot{\Delta}^{(t-1)}} \{\bx^{(t-1)}\} \\
    \Leftrightarrow \bx^{(t)} \left( n + \dot{\Delta}_n^{(t)} \right) &= \bx^{(t-1)} \left(n + \dot{\Delta}_n^{(t-1)} \right), \forall n.
\end{align*}
Substituting the LHS in the motion relation in \equref{equ:optical_flow_relation}, we obtain the following equivalent conditions $\forall n$:
\begin{align*}
    \bx^{(t-1)}\left( n + \dot{\Delta}_n^{(t)} - o(n + \dot{\Delta}_n^{(t)}) \right) &= \bx^{(t-1)} \left(n + \dot{\Delta}_n^{(t-1)} \right) \\
    \Leftrightarrow \dot{\Delta}_n^{(t)} - o(n + \dot{\Delta}_n^{(t)}) &= \dot{\Delta}_n^{(t-1)} \\
    \Leftrightarrow o\left(n + \dot{\Delta}_n^{(t)}\right) &= \dot{\Delta}_n^{(t)} - \dot{\Delta}_n^{(t-1)} = \dot{\br}^{(t)}_n.
\end{align*}
(since $\bx^{(t)}$ is locally varying).
\end{proof}

The above result shows that by enforcing consistent output and sharing weights $\bw$ across frames, the learned deformed map $\Dot{\Delta}_n^{(t)}$ encodes motion information, as in \equref{equ:encoded_motion}. Hence, we can effectively model both appearance and motion information in a single network with LCDC, instead of using two different streams.

\subsection{Spatio-temporal features}
\label{ssec:spatiotemporal}
To create the spatio-temporal feature, we further concatenate across channel dimensions the learned motion information $\dot{\br}^{(t)}$ from multiple layers with appearance features (output of the last layer $\mathbf{y}_{L}^{(t)}$). We illustrate this process in \figref{fig:mrDeRF_archi}. To model the fusion mechanism, we used two 3D convolutions followed by two fc layers. Each 3D convolution unit was followed by batch normalization, ReLU activation, and 3D max pooling to gradually reduce temporal dimension (while the spatial dimension is retained). Outputs of the final fc layer can be flexibly used as the features for any long-temporal modeling networks, such as ST-CNN~\cite{lea2016segmental}, Dilated-TCN~\cite{lea2017tempconvnet}, or ED-TCN~\cite{lea2017tempconvnet}.
\section{Experiments}
\label{sec:experiment}

\subsection{Implementation details}
\label{ssec:imp_detail}
We implemented our approach using ResNet50 with deformable convolutions as backbone (at layers \texttt{conv5a}, \texttt{conv5b}, and \texttt{conv5c} as in \cite{dai17dcn}). Local coherency constraints were added on all existing deformable convolutions layers. For the fusion module, we used
a spatial kernel with size 3 and stride 1; and temporal kernel with size 4 and stride 2. We also used pooling with size 2 and stride 2 in 3D max pooling. Temporal dimension was collapsed by averaging. The network ended with two fully connected layers. Standard cross-entropy loss with weight regularization was used to optimize the model. After training, LCDC features (last fc layer) were extracted and incorporated into long-temporal models.
All data cross-validation splits followed the settings of \cite{lea2017tempconvnet}. Frames were resized to 224x224 and augmented using random cropping and mean removal. Each video snippet contained 16 frames after sampling. For training, we downsampled to 6fps on 50 salads and 15 fps on GTEA, because of different motion speeds, to make sure one video snippet contained enough information to describe motion. For testing, features were downsampled with the same frame rates as other papers for comparison. We used the common Momentum optimizer \cite{qian1999momentum} (with momentum of 0.9) and followed the standard procedure of hyper-parameter search. Each training routine consisted of 30 epochs; learning rate was initialized as $10^{-4}$ and decayed every $10$ epochs with a decaying rate of $0.96$.

\subsection{Datasets}
\label{ssec:dataset}
We evaluate our approach on two standard datasets, namely, 50 Salads dataset and GTEA dataset.

\noindent \textbf{50 Salads Dataset \cite{stein2013salads}:} This dataset contains 50 salad making videos from multiple sensors. We only used RGB videos in our work. Each video lasts from 5-10 minutes, containing multiple action instances. We report results for \textit{mid} (17 action classes) and \textit{eval} granularity level (9 action classes) to be consistent with results reported in \cite{lea2017tempconvnet, lea2016segmental,  lei2018temporal}.

\noindent \textbf{Georgia Tech Egocentric Activities (GTEA) \cite{fathi2011gtea}:} This dataset contains 28 videos of 7 action classes, performed by 4 subjects. The camera in this dataset is head-mounted, thus introducing more motion instability. Each video is about 1 minute long and has around 19 different actions on average.

\subsection{Baselines}
\label{ssec:baselines}
We compare LCDC with several baselines including (1) methods which do not involve long-temporal modeling where comparison is at spatio-temporal feature level (SpatialCNN) and (2) methods with long-temporal modeling (ST-CNN, DilatedTCN, and ED-TCN).

\noindent \textbf{SpatialCNN \cite{lea2016segmental}:} a VGG-like model that learns both spatial and \textit{short-term} temporal information by stacking an RGB frame with the corresponding MHI (the difference between frames over a \textit{short} period of time). MHI is used for both 50 Salads and GTEA datasets instead of optical flow as optical flow was observed to suffer from small motion and data compression noise \cite{lea2017tempconvnet, lea2016segmental}. SpatialCNN features are also used as \textit{inputs} for ST-CNN, DilatedTCN, ED-TCN, and TDRN.

\noindent \textbf{ST-CNN \cite{lea2016segmental}, DilatedTCN \cite{lea2017tempconvnet}, and ED-TCN \cite{lea2017tempconvnet}:} are long-temporal modeling frameworks. Long-term dependency was modeled using a 1D convolution layer in ST-CNN, stacked dilated convolutions in DilatedTCN, and an encoder-decoder with pooling and up-sampling in ED-TCN. All three frameworks were originally proposed with SpatialCNN features as their input. We incorporated LCDC features into these long-temporal models and compared with the original results.

We obtained the publicly available implementations of  ST-CNN, DilatedTCN, and ED-TCN from \cite{leagit}. On incorporating LCDC features into these models, we observed that training from scratch can become sensitive to random initialization. This is likely because these long-temporal models have a low complexity (\ie only a few layers) and the input features are not augmented. We ran each long-temporal model (with LCDC features) five times and report means and standard deviations over multiple metrics. For completeness, we have also included original results from TDRN (where the input was SpatialCNN features as well) \cite{lei2018temporal}. However, TDRN's implementation was not publicly available so we were unable to incorporate LCDC with TDRN.

\subsection{Results}
\label{ssec:results}
\begin{table*}
    \footnotesize
    \centering
    \begin{tabularx}{\textwidth}{llccccccc}
        \thickhline
         & Model & Spatial comp & Temporal comp (short) & Long-temporal & F1@10 & Edit & Acc  \\
        \hline \hline
        \multirow{9}{*}{\rotatebox[origin=c]{90}{Mid}} 
         & SpatialCNN \cite{lea2016segmental} & RGB & MHI & - & 32.3 & 24.8 & 54.9 \\
         & (SpatialCNN) + ST-CNN \cite{lea2016segmental}       & RGB & MHI & 1D-Conv    & 55.9 & 45.9 & 59.4 \\
         & (SpatialCNN) + DilatedTCN \cite{lea2017tempconvnet} & RGB & MHI & DilatedTCN & 52.2 & 43.1 & 59.3 \\
         & (SpatialCNN) + ED-TCN \cite{lea2017tempconvnet}     & RGB & MHI & ED-TCN     & 68.0 & 59.8 & 64.7 \\
         & (SpatialCNN) + TDRN \cite{lei2018temporal}          & RGB & MHI & TDRN       & (72.9) & (66.0) & (68.1) \\
         \cline{2-8}
         & LCDC            & RGB & Learned deformation & -       & 43.99 & 33.38 & 67.27 \\
         & LCDC + ST-CNN     & RGB & Learned deformation & 1D-Conv & 60.01$\pm$0.42 & 51.35$\pm$0.12 & 68.45$\pm$0.15 \\
         & LCDC + DilatedTCN & RGB & Learned deformation & DilatedTCN & 58.21$\pm$0.59 & 48.54$\pm$0.52 & 69.28$\pm$0.25 \\
         & LCDC + ED-TCN     & RGB & Learned deformation & ED-TCN  & \textbf{73.75}$\pm$\textbf{0.54} & \textbf{66.94}$\pm$\textbf{1.33} & \textbf{72.12}$\pm$\textbf{0.41} \\
        \hline \hline
        \multirow{8}{*}{\rotatebox[origin=c]{90}{Eval}}
         & Spatial CNN \cite{lea2016segmental}  & RGB & MHI & - & 35.0 & 25.5 & 68.0 \\
         & (SpatialCNN) + ST-CNN \cite{lea2016segmental}       & RGB & MHI & 1D-Conv    & 61.7 & 52.8 & 71.3 \\
         & (SpatialCNN) + DilatedTCN \cite{lea2017tempconvnet} & RGB & MHI & DilatedTCN & 55.8 & 46.9 & 71.1 \\
         & (SpatialCNN) + ED-TCN \cite{lea2017tempconvnet}     & RGB & MHI & ED-TCN & 76.5 & 72.2 & 73.4 \\
         \cline{2-8}
         & LCDC        & RGB & Learned deformation & -       & 56.56 & 45.77 & 77.59 \\
         & LCDC + ST-CNN & RGB & Learned deformation & 1D-Conv & 70.46$\pm$0.41 & 62.71$\pm$0.46 & 77.84$\pm$0.26 \\
         & LCDC + DilatedTCN & RGB & Learned deformation & DilatedTCN & 67.59$\pm$0.42 & 58.97$\pm$0.55 & 78.29$\pm$0.29 \\
         & LCDC + ED-TCN & RGB & Learned deformation & ED-TCN  & \textbf{80.22}$\pm$\textbf{0.21} & \textbf{74.56}$\pm$\textbf{0.70} & \textbf{78.90}$\pm$\textbf{0.25} \\
        \thickhline
    \end{tabularx}
    \vspace{-2mm}
    \caption{Results on 50 salads dataset (\textit{mid} and \textit{eval}-level). Learned deformation is $\dot{\Delta}$ in \equref{equ:dot_delta}. Means and standard deviations over five runs are reported for LCDC with long-temporal models. Results of baselines are directly reported from their original publications. Please note that since TDRN implementation was not publicly available, LCDC features were not incorporated into TDRN and hence the TDRN results (in parentheses) are not directly comparable with LCDC results.
    }
    \label{tab:results_50_salads}
\end{table*}

\begin{table*}
    \footnotesize
    \centering
    \begin{tabularx}{\textwidth}{Xcccccc}
        \thickhline
        Model & Spatial comp & Temporal comp (short) & Long-temporal & F1@10 & Edit & Acc  \\
        \hline \hline
        SpatialCNN \cite{lea2016segmental}   & RGB & MHI & -          & 41.8 & - & 54.1 \\
        (SpatialCNN) + ST-CNN \cite{lea2016segmental}       & RGB & MHI & 1D-Conv    & 58.7 & - & 60.6 \\
        (SpatialCNN) + DilatedTCN \cite{lea2017tempconvnet} & RGB & MHI & DilatedTCN & 58.8 & - & 58.3 \\
        (SpatialCNN) + ED-TCN \cite{lea2017tempconvnet}     & RGB & MHI & ED-TCN     & 72.2 & - & 64.0 \\
        (SpatialCNN) + TDRN \cite{lei2018temporal}          & RGB & MHI & TDRN       & (79.2) & (74.1) & (70.1) \\
        \hline
        LCDC        & RGB & Learned deformation & -       & 52.42 & 45.38 & 55.32 \\
        LCDC + ST-CNN & RGB & Learned deformation & 1D-Conv & 62.23$\pm$0.69 & 55.75$\pm$0.94 & 58.36$\pm$0.45 \\
        LCDC + DilatedTCN & RGB & Learned deformation & DilatedTCN & 62.08$\pm$0.85 & 55.13$\pm$0.79 & 58.07$\pm$0.30 \\
        LCDC + ED-TCN & RGB & Learned deformation & ED-TCN  & \textbf{75.39}$\pm$\textbf{1.33} & \textbf{72.84}$\pm$\textbf{0.84} & \textbf{65.34}$\pm$\textbf{0.54} \\
        \thickhline
    \end{tabularx}
    \vspace{-2mm}
    \caption{Results on GTEA dataset. Table format follows the same convention as in \tabref{tab:results_50_salads}.}
    \label{tab:results_gtea}
    \vspace{-4mm}
\end{table*}

\begin{figure*}
    \centering
    \begin{subfigure}[t]{0.50\linewidth}
		\includegraphics[width=\linewidth, height=7.5em]{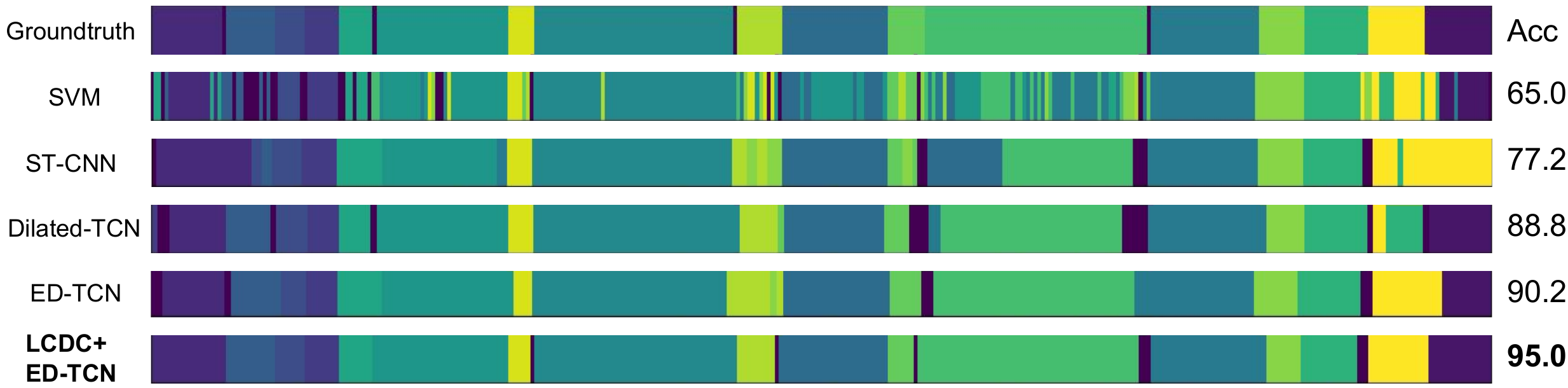}
		\caption{50 Salads dataset (\textit{mid}-level).}
		\label{sfig:compare_segments_50salads}
	\end{subfigure}
	\hfill
    \begin{subfigure}[t]{0.45\linewidth}
		\includegraphics[width=\linewidth, height=7.5em, trim={2.3cm 0 0 0},clip]{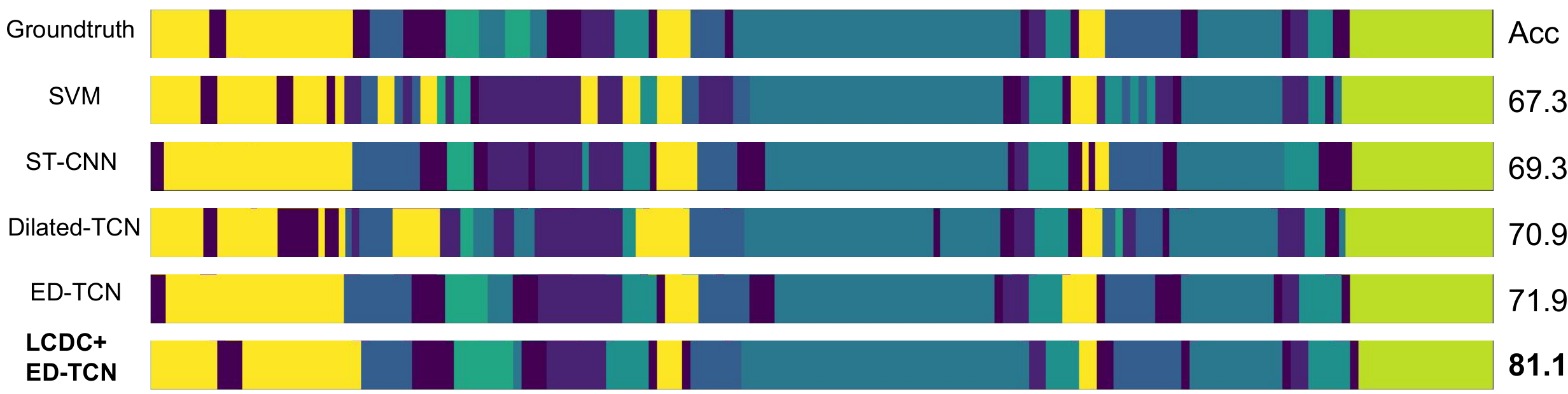}
		\caption{GTEA dataset.}
		\label{sfig:compare_segments_gtea}
	\end{subfigure}
	\vspace{-2mm}
    \caption{Comparison of segmentation results across different methods on two test videos (one each for 50 Salads and GTEA dataset). SVM, ST-CNN, DilatedTCN, and ED-TCN are original results with SpatialCNN features. LCDC features are used in conjunction with ED-TCN long-temporal model in the last row. Framewise accuracy is reported for each setup.}
    \label{fig:compare_segments}
\end{figure*}

\begin{table*}
    \footnotesize
    \centering
    \begin{tabularx}{\textwidth}{Xcccccccc}
        \thickhline
        Model & Spatial comp & Temporal comp (short) & Fusion scheme & Acc & Total params & Deform params \\
        \hline \hline
        SpatialCNN          & RGB (single) & MHI (multi) & Stacked inputs  & 60.99 & -     & - \\
        NaiveAppear     & RGB (single) & -           & -               & 68.45 & 38.9M & - \\
        NaiveTempAppear & RGB (multi)  & Avg feat frames (multi) & -   & 71.52 & 38.9M & - \\
        OptFlowMotion       & -            & OptFlow (multi) & -           & 25.67 & 134.1M & - \\
        TwoStreamNet        & RGB (multi)  & OptFlow (multi) & Avg scores  & 71.82 & 173.0M & - \\
        \hline
        DC   & RGB (multi) & Learned deformation (w/o local coherency) (multi) & 3D-Conv & 72.25 & 45.7M & 995.5K \\
        LCDC & RGB (multi) & Learned deformation (multi) & 3D-Conv & \textbf{73.77} & \textbf{42.7M} & \textbf{27.7K} \\
        \thickhline
    \end{tabularx}
    \vspace{-2mm}
    \caption{Ablation study on 50 Salads dataset (Split 1, \textit{mid}-level). ``Single'' and ``multi'' indicate the amount of input frames for spatial/temporal components.}
    \label{tab:ablation}
    \vspace{-4mm}
\end{table*}

We benchmark our approach using three standard metrics reported in \cite{lea2017tempconvnet, lei2018temporal}: frame-wise accuracy, segmental edit score, and F1 score with overlapping of 10\% (F1@10). Since edit and F1 scores penalize over-segmentation, accuracy metric is more suitable to evaluate the quality of short-term spatio-temporal features (SpatialCNN and LCDC). All mentioned metrics are sufficient to assess the performance of long-temporal models (ST-CNN, DilatedTCN, ED-TCN, and TDRN). We have also specified inputs for spatial and short-term temporal components, as well as the long-temporal model in each setup (\tabref{tab:results_50_salads} and \tabref{tab:results_gtea}).

\tabref{tab:results_50_salads} shows the results on 50 Salads dataset on both granularity levels. Overall performance of LCDC setups, with long-temporal models, outperform their counterparts. We highlight our LCDC + ED-TCN setups as they provided the most significant improvement over other baselines. Compared to the original ED-TCN, which used SpatialCNN features, our approach increases by 5.75\%, 7.14\%, 7.42\% on mid-level and 3.72\%, 2.36\%, 5.5\% on eval-level, in terms of F1@10, edit score, and accuracy. 
\tabref{tab:results_gtea} shows the results on GTEA dataset and is organized in a fashion similar to \tabref{tab:results_50_salads}. 
We achieve the best performance when incorporating LCDC features with ED-TCN framework out of the three baselines. LCDC + ED-TCN also outperforms the original SpatialCNN + ED-TCN on both reported metrics: improving by 3.19\% and 1.34\%, in terms of F1@10 and accuracy.

We further show segmentation results of test videos from 50 Salads (on \textit{mid}-level granularity) (\figref{sfig:compare_segments_50salads}) and GTEA datasets (\figref{sfig:compare_segments_gtea}). In the figures, the first row is the ground-truth segmentation. The next four rows are results from different long-temporal models using SpatialCNN features: SVM, ST-CNN, DilatedTCN, and ED-TCN. All of these segmentation results are directly retrieved from the provided features in \cite{lea2017tempconvnet}, without any further training. The last row shows the segmentation results of our LCDC + ED-TCN. Each row also comes with its respective accuracy on the right. On 50 Salads dataset, \figref{sfig:compare_segments_50salads} shows that LCDC + ED-TCN achieves a 4.8\% improvement over original ED-TCN. On GTEA dataset, \figref{sfig:compare_segments_gtea} shows a strong improvement of LCDC over ED-TCN, being 9.2\% in terms of accuracy. We also achieve a higher accuracy on the temporal boundaries, \ie the beginning and the end of an action instance is close to that of ground-truth.

\subsection{Ablation study}
\label{ssec:ablation}

We performed an ablation study (\tabref{tab:ablation}) on Split 1 and \textit{mid}-level granularity of 50 Salads dataset to compare LCDC with SpatialCNN and a two-stream framework. For each setup (each row in the table), we show the inputs for spatial and short-term temporal components, its fusion scheme, frame-wise accuracy, the total number of parameters of the model, and the number of parameters related to deformable convolutions (wherever applicable). Since this experiment focuses on comparing short-term features, accuracy metric is more suitable. We also report whether a component requires single or multiple frames as input.

We evaluate on the following setups: 
\textbf{(1) SpatialCNN:} The features from \cite{lea2016segmental} described in Section \ref{ssec:baselines}. Its inputs are stacked RGB frame and MHI. 
\textbf{(2) NaiveAppear:} Frame-wise class prediction using ResNet50 (no temporal information involved in this setup). 
\textbf{(3) NaiveTempAppear:} Appearance stream from conventional two-stream frameworks uses a single frame input and VGG backbone. Therefore, comparing LCDC with the above is not straight-forward. We created an appearance stream with multiple input frames and ResNet50 backbone for better comparison with LCDC. Temporal component was modeled by averaging feature frames (before feeding to two fc layers with ReLU). This model is the same as \textit{NaiveAppear}, except that we have multiple frames per video snippet. 
\textbf{(4) OptFlowMotion:} Motion stream that models temporal component using VGG-16 (with stacked dense optical flows as input). This is similar to the motion component of conventional two-stream networks.
\textbf{(5) TwoStreamNet:} The two-stream framework obtained by averaging scores from \textit{NaiveTempAppear} and \textit{OptFlowMotion}. We follow the fusion scheme used in conventional two-stream network \cite{simonyan14twostream}.
\textbf{(6) DC:} Receptive fields of deformable convolution network (with backbone ResNet50) are used to model motion, but without local coherency constraint. 
\textbf{(7) LCDC:} The proposed LCDC model which additionally enforces local coherency constraint on receptive fields.

Compared to \textit{SpatialCNN}, \textit{NaiveAppear} has a higher accuracy because the \textit{SpatialCNN} features are extracted using VGG-like model while \textit{NaiveAppear} uses ResNet50. The accuracy is further improved by 3.07\% by averaging multiple feature frames in \textit{NaiveTempAppear}. Notice that the number of parameters of \textit{NaiveAppear} and \textit{NaiveTempAppear} are the same because the only difference is the number of frames being used as input (averaging requires no parameters). Accuracy from \textit{OptFlowMotion} is lower than other models because the motion in 50Salads is hard to capture using optical flow. This is consistent with the observation in \cite{lea2017tempconvnet, lea2016segmental} that optical flow is inefficient for the dataset. Combining \textit{OptFlowMotion} with \textit{NaiveTempAppear} in \textit{TwoStreamNet} slightly improves the performance. However, the number of parameters is significantly increased because of complexity of \textit{OptFlowMotion}. This prevented us from having a larger batch size or training the two streams together.

Both of our DC and LCDC frameworks, which model temporal components as difference of receptive fields, outperform the two-stream approach \textit{TwoStreamNet} with significantly lower model complexities. \textit{DC}, which directly uses adaptive receptive fields from the original deformable convolution, increases the accuracy to 72.25\%. LCDC further improves accuracy to 73.77\% and with even fewer parameters. This complexity reduction is because LCDC uses fewer parameters for deformation offsets. It means the extra parameters of DC are not necessary to model spatio-temporal features, and thus can be removed. Moreover, if we consider only the parameters related to deformable convolutions, \textit{DC} would require 36x more parameters than \textit{LCDC}. The reduction of 36x matches our derivation in Sec \ref{ssec:dconv_motion_constraint}, where $K_h$=$K_w$=$3$ and $G$=$4$. The number of reduced parameters is proportional to the number of deformable convolution layers.
\section{Conclusion}
\label{sec:conclusion}
We introduced locally-consistent deformable convolution (LCDC) and created a single-stream network that can jointly learn spatio-temporal features by exploiting motion in adaptive receptive fields. The framework is significantly more compact and can produce robust spatio-temporal features without using conventional motion extraction methods, \eg optical flow. LCDC features, when incorporated into several long-temporal networks, outperformed their original implementations. For future work, we plan to unify long-temporal modeling directly into the framework.
\\
{
\noindent\textbf{Acknowledgments:} 
This material is based upon work supported in part by C3SR. Rogerio Feris is partly supported by IARPA via DOI/IBC contract number D17PC00341.
The U.S. Government is authorized to reproduce and distribute reprints for Governmental purposes notwithstanding any copyright annotation thereon. Disclaimer: The views and conclusions contained herein are those of the authors and should not be interpreted as necessarily representing the official policies or endorsements, either expressed or implied, of IARPA, DOI/IBC, or the U.S. Government)
}

{\small
\bibliographystyle{ieee_fullname}
\bibliography{ref}
}

\onecolumn

\appendix
\begin{center}
    \Large
    \textbf{Supplementary Material for Learning Motion in Feature Space: Locally-Consistent Deformable Convolution Networks for Fine-Grained Action Detection}
\end{center}
\addcontentsline{toc}{section}{Appendices}
\renewcommand{\thesubsection}{\Alph{subsection}}

\section{Full formulation for convolutions with multiple output channels}
Suppose that the input $\bx$ of a convolution has $I$ channels and the output has $O$ channels, \ie $\bx \in R^{N \times I}, \by \in \R^{M \times O}$, we can write the standard convolution as:
\begin{equation}
    \by_j[n] = \sum_i \sum_k \bw_{j,i}[-k] \bx_i \left[ n+k \right],
\end{equation}
where $i \in \{1, \dots, I\}, j \in \{1, \dots, O\}$, and $\bw \in \R^{N \times K \times I \times O}$. The original deformable convolution, therefore, is written as:
\begin{equation}
    \by_j[n] = \sum_i \sum_k \bw_{j,i}[-k] \bx_i \left( n + k + \ddot{\Delta}_{n, k} \right),
\end{equation}
In reality, there are multiple deformable groups ($G > 1$), meaning that different input channels can have different deformation offsets. Specifically, a multi-channel deformable convolution with multiple deformable group can be written as:
\begin{equation}
    \by_j[n] = \sum_i \sum_k \bw_{j,i}[-k] \bx_i \left( n + k + \ddot{\Delta}_{g_i, n, k} \right),
\end{equation}
where $g_i$ is the deformable group that the input channel $i$ belongs to. We keep the deformable group as $G=1$ and drop the notation $g_i$ for the sake of simplicity.

We write the multi-channel LCDC as:
\begin{equation}
    \by_j[n] = \sum_i \sum_k \bw_{j,i}[-k] \bx_i \left( n + k + \dot{\Delta}_{n+k} \right).
\end{equation}
It is equivalent to
\begin{equation}
    \by_j[n] = \sum_i \sum_k \bw_{j,i}[-k] \tilde{\bx}_i[n+k] = (\tilde{\bx}*\bw_{j})[n],
\end{equation}
where
\begin{equation}
    \tilde{\bx}_i[n] = (D_{\dot{\Delta}} \{\bx_i\})[n] = \bx_i\left( n + \dot{\Delta}_n \right).
\end{equation}

\section{More reasoning on the difference of receptive fields}
$\ddot{\br}^{(t)}$ and $\dot{\br}^{(t)}$ of deformable convolution and locally-consistent deformable convolution \textit{carries temporal information} because the offsets are constructed from inputs at different time frames. This property is not valid in other types of convolutions. We can write standard convolutions and dilated convolutions as special cases of deformable convolutions, \ie $\ddot{\Delta} = 0$ in standard convolution and $\ddot{\Delta}^{(t)} = const, \forall t$. Hence,
\begin{itemize}
    \item Standard convolution: \[\ddot{\Delta}^{(t)} = 0, \forall t \Rightarrow \ddot{\br}^{(t)} = 0, \forall t,\]
    \item Dilated convolution: \[\ddot{\Delta}^{(t)} = \ddot{\Delta}^{(t-1)}, \forall t \Rightarrow \ddot{\br}^{(t)} = 0, \forall t.\]
    \item Deformable convolution: \[\ddot{\Delta}^{(t)} \neq \ddot{\Delta}^{(t-1)} \Rightarrow \ddot{\br}^{(t)} \neq 0.\]
    \item Locally-consistent deformable convolution: \[\dot{\Delta}^{(t)} \neq \dot{\Delta}^{(t-1)} \Rightarrow \dot{\br}^{(t)} \neq 0.\]
\end{itemize}

\section{In-detail architecture of LCDC}
\tabref{tab:lcdc_archi_detail} shows the detailed architecture implementation of LCDC.

{\footnotesize
\begin{longtable}{l|llll}
    \centering
    Layer & Input(s) & Output size & Kernel size & Comments \\
    \hline \hline
    conv1  & data      & (112,112,64) & (7,7,64), stride2 & with 3,3 maxpool, stride2 \\
           &           &              &                   & frames of all snippets \\
           &           &              &                   & are unrolled \\
    \hline
    conv2x & bn\_conv1 & (56,56,256) & $\begin{bmatrix}1,1,64\\3,3,64\\1,1,256\end{bmatrix}\times 3$ & input is output of conv1 \\
    \hline
    conv3x & res2c\_relu & (28,28,512) & $\begin{bmatrix}1,1,128\\3,3,128\\1,1,512\end{bmatrix}\times 4$ & input is output of conv2x \\
    \hline
    conv4x & res3d\_relu & (14,14,1024) & $\begin{bmatrix}1,1,256\\3,3,256\\1,1,1024\end{bmatrix}\times 6$ & input is output of conv3x \\
    \hline
    res5a\_branch1                  & res4f\_relu           & (14,14,2048) & (1,1,2048) & input is output of conv4x \\
    bn5a\_branch1                   & (prev)                & (14,14,2048) & -          & batch normalization \\
    res5a\_branch2a                 & (prev)                & (14,14,512)  & (1,1,512)  & convolution \\
    bn5a\_branch2a                  & (prev)                & (14,14,512)  & -          & batch normalization \\
    res5a\_branch2a\_relu           & (prev)                & (14,14,512)  & -          & ReLU \\
    res5a\_branch2b\_offset         & (prev)                & (14,14,2)    & (3,3,2)    & offset learner \\
    res5a\_branch2b\_offset\_expand & (prev)                & (14,14,18)   & -          & expand by replication \\
    res5a\_branch2b                 & res5a\_branch2a\_relu & (14,14,512)  & (3,3,512)  & deformable convolution \\
                                    & res5a\_branch2b\_offset\_expand & & & \\
    bn5a\_branch2b                  & (prev)                & (14,14,512)  & -          & batch normalization \\
    res5a\_branch2b\_relu           & (prev)                & (14,14,512)  & -          & ReLU \\
    res5a\_branch2c                 & (prev)                & (14,14,2048) & (1,1,2048) & convolution\\
    bn5a\_branch2c                  & (prev)                & (14,14,2048) & -          & batch normalization \\
    res5a                           & bn5a\_branch1         & (14,14,2048) & -          & addition \\
                                    & bn5a\_branch2c        & & & \\
    res5a\_relu                     & (prev)                & (14,14,2048) & -          & ReLU \\
    \hline
    res5b\_branch2a                 & (prev)                & (14,14,512)  & (1,1,512)  & convolution \\
    bn5b\_branch2a                  & (prev)                & (14,14,512)  & -          & batch normalization \\
    res5b\_branch2a\_relu           & (prev)                & (14,14,512)  & -          & ReLU \\
    res5b\_branch2b\_offset         & (prev)                & (14,14,2)    & (3,3,2)    & offset learner \\
    res5b\_branch2b\_offset\_expand & (prev)                & (14,14,18)   & -          & expand by replication \\
    res5b\_branch2b                 & res5b\_branch2a\_relu & (14,14,512)  & (3,3,512)  & deformable convolution \\
                                    & res5b\_branch2b\_offset\_expand & & & \\
    bn5b\_branch2b                  & (prev)                & (14,14,512)  & -          & batch normalization \\
    res5b\_branch2b\_relu           & (prev)                & (14,14,512)  & -          & ReLU \\
    res5b\_branch2c                 & (prev)                & (14,14,2048) & (1,1,2048) & convolution \\
    bn5b\_branch2c                  & (prev)                & (14,14,2048) & -          & batch normalization \\
    res5b                           & res5a\_relu           & (14,14,2048) & -          & addition \\
                                    & bn5b\_branch2c        & & & \\
    res5b\_relu                     & (prev)                & (14,14,2048) & -          & ReLU \\
    \hline
    res5c\_branch2a                 & (prev)                & (14,14,512)  & (1,1,512)  & convolution \\
    bn5c\_branch2a                  & (prev)                & (14,14,512)  & -          & batch normalization \\
    res5c\_branch2a\_relu           & (prev)                & (14,14,512)  & -          & ReLU \\
    res5c\_branch2b\_offset         & (prev)                & (14,14,2)    & (3,3,2)    & offset learner \\
    res5c\_branch2b\_offset\_expand & (prev)                & (14,14,18)   & -          & expand by replication \\
    res5c\_branch2b                 & res5c\_branch2a\_relu & (14,14,512)  & (3,3,512)  & deformable convolution \\
                                    & res5c\_branch2b\_offset\_expand & & & \\
    bn5c\_branch2b                  & (prev)                & (14,14,512)  & -          & batch normalization \\
    res5c\_branch2b\_relu           & (prev)                & (14,14,512)  & -          & ReLU \\
    res5c\_branch2c                 & (prev)                & (14,14,2048) & (1,1,2048) & convolution \\
    bn5c\_branch2c                  & (prev)                & (14,14,2048) & -          & batch normalization \\
    res5c                           & res5b\_relu           & (14,14,2048) & -          & addition \\
                                    & bn5c\_branch2c        & & & \\
    res5c\_relu                     & (prev)                & (14,14,2048) & -          & ReLU \\
    conv\_new\_1                    & (prev)                & (14,14,256)  & (1,1,256)  & convolution \\
    conv\_new\_1\_relu              & (prev)                & (14,14,256)  & -          & ReLU \\
    \hline
    spacetime\_fusion & conv\_new\_1\_relu      & (L-1,14,14,262) & -           & reshape all frames back into \\
                      & res5a\_branch2b\_offset &                 &             & snippets, then concatenate \\
                      & res5b\_branch2b\_offset &                 &             & difference of all offset layers \\
                      & res5c\_branch2b\_offset &                 &             & with conv\_new\_1\_relu \\
    spacetime\_conv1  & (prev)                  & (L-1,14,14,256) & (4,3,3,256) & 3Dconv with window size \\
                      &                         &                 &             & for temporal dimension of 4\\
    spacetime\_bn1    & (prev)                  & (L-1,14,14,256) & -           & batch normalization \\
    spacetime\_relu1  & (prev)                  &                 & -           & ReLU \\
    spacetime\_pool1  & (prev)                  & ((L-1)/2,14,14,256) & -       & temporal max pooling of size 2\\
    spacetime\_conv2  & (prev)                  & ((L-1)/2,14,14,256) & (4,3,3,256) & 3Dconv with window size \\
                      &                         &                     &             & for temporal dimension of 4\\
    spacetime\_bn2    & (prev)                  & ((L-1)/2,14,14,256) & -           & batch normalization \\
    spacetime\_relu2  & (prev)                  &                     & -           & ReLU \\
    spacetime\_pool2  & (prev)                  & ((L-1)/4,14,14,256) & -           & temporal max pooling of size 2\\
    spacetime\_reduce & (prev)                  & (14,14,256)         & -           & averaging across time domain \\
    \hline
    pool\_new          & (prev) & (7,7,256) & - & max pooling, stride 2\\
    fc\_new\_1         & (prev) & (1024)    & - & fully connected with ReLU \\
    fc\_new\_2         & (prev) & (1024)    & - & fully connected with ReLU \\
    \caption{LCDC architecture in detail. The groups conv1, conv2x, conv3x, and conv4x are the same as the original ResNet50. The convention of kernel size: (kernel\_height, kernel\_width, number\_of\_output\_channels) for 2D convolution and (kernel\_time, kernel\_height, kernel\_width, number\_of\_output\_channels) for 3D convolution. Size of input data is (224, 224, 3). $L$ is the number of frames per video snippet (we choose $L=16$). If the input is annotated as \textit{(prev)}, it means it uses the output from the previous layer.}
    \label{tab:lcdc_archi_detail}
\end{longtable}}

\section{In-detail figures}
We provide higher-resolution versions of \figref{fig:archi_overview}, \figref{fig:mrDeRF_archi}, and \figref{fig:receptive_fields} in \figref{fig:archi_overview_big}, \figref{fig:mrDeRF_archi_big}, and \figref{fig:receptive_fields_big} respectively. 
\figref{fig:compare_segments_50salads_big} and \figref{fig:compare_segments_gtea_big} also show higher-resolution versions of \figref{fig:compare_segments} with annotation of color-code. We also provide the groundtruth action sequence of the two videos. Readers can view the videos corresponding to \figref{fig:compare_segments_50salads_big} and \figref{fig:compare_segments_gtea_big} in other additional supplementary materials (\textit{50salads.mp4} and \textit{gtea.mp4}).

\begin{figure}[H]
	\centering
	\includegraphics[width=0.7\linewidth]{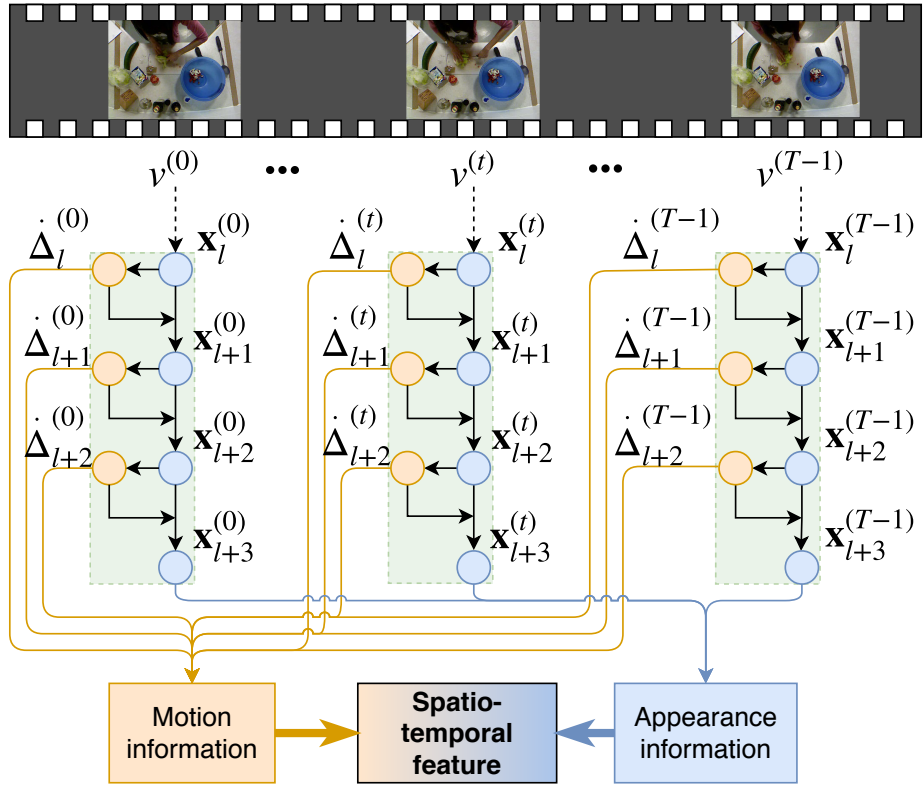}
	\caption{Network architecture of our proposed framework across multiple frames $v^{(t)}$. Appearance information comes from the last layer while motion information is extracted directly from deformation $\dot{\Delta}$ in the feature space instead of from a separate optical flow stream. Weights are shared across frames over time.}
	\label{fig:archi_overview_big}
\end{figure}

\begin{figure}
	\centering
	\includegraphics[width=\linewidth]{mrDeRF_archi_v5.pdf}
	\caption{A more detailed view of our network architecture with the fusion module. Appearance information comes from output of the last layer while motion information comes from aggregating $\dot{\br}$ from multiple layers. Outputs of the final fc layer can be flexibly used as the features for any long-temporal modeling networks.}
	\label{fig:mrDeRF_archi_big}
\end{figure}

\begin{figure}
    \centering
    \includegraphics[width=0.9\linewidth]{receptive_fields_v4.pdf}
    \caption{Illustration of temporal information modeled by the difference of receptive fields at a single location in 2D. Only deformable convolution can capture temporal information (shown with red arrows). Related to \equref{equ:deconv} and \equref{equ:diff_receptive_field}, $n$ is red square, $n+k$ are green dots, $\ddot{\Delta}_{n,k}$ are black arrows, $n+k+\ddot{\Delta}_{n,k}$ are blue dots, and $\ddot{\br}$ are red arrows.}
    \label{fig:receptive_fields_big}
\end{figure}

\begin{figure}
    \centering
    \includegraphics[width=0.9\linewidth]{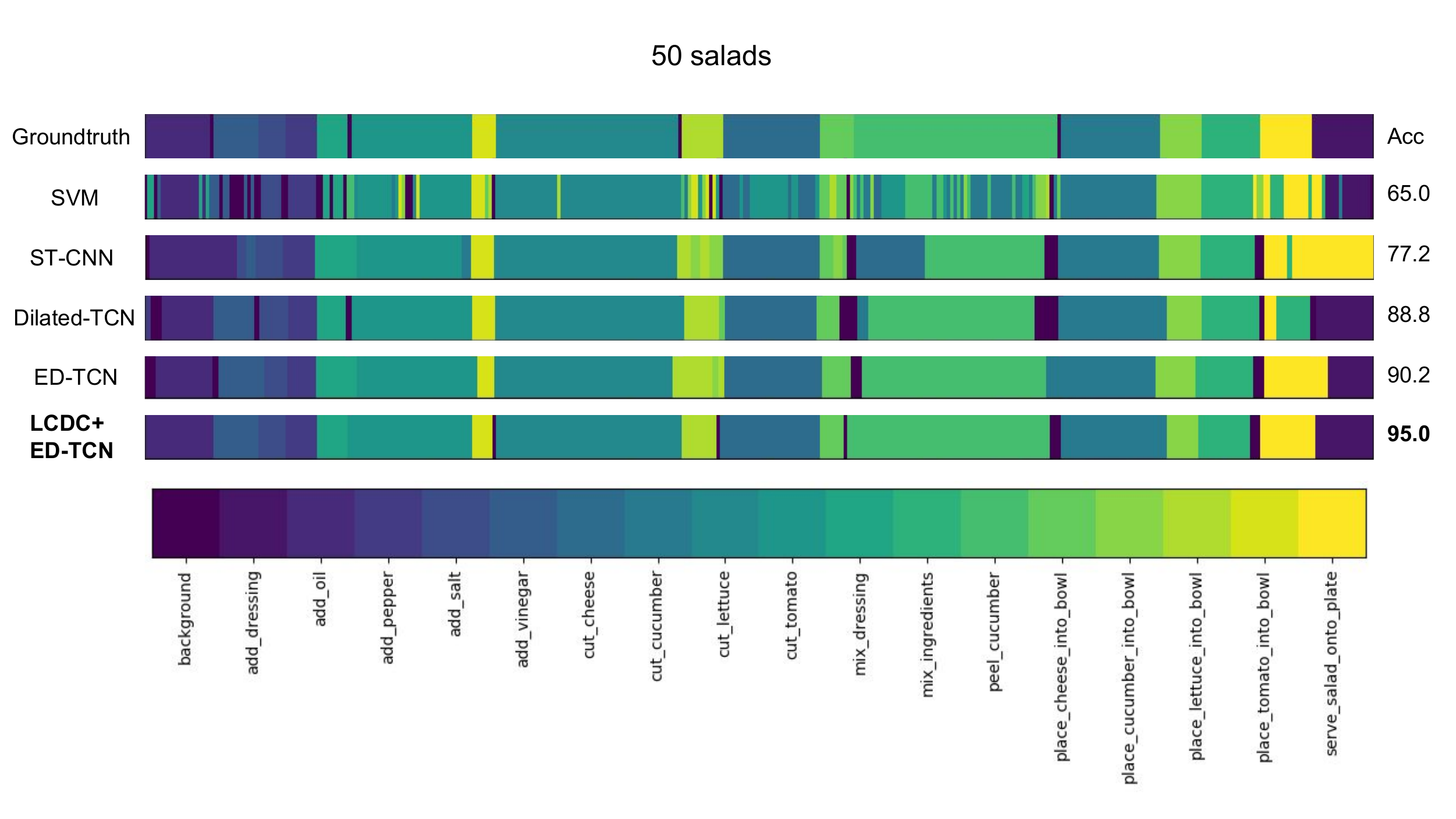}
    \vspace{-2mm}
    \caption{Comparison of segmentation results across different methods on a test video from 50 Salads dataset (\textit{mid}-level). The action sequence is: \textit{add\_oil, background, add\_vinegar, add\_salt, add\_pepper, mix\_dressing, background, cut\_tomato, place\_tomato\_into\_bowl, cut\_lettuce, background, place\_lettuce\_into\_bowl, cut\_cheese, place\_cheese\_into\_bowl, peel\_cucumber, background, cut\_cucumber, place\_cucumber\_into\_bowl, mix\_ingredients, serve\_salad\_onto\_plate, add\_dressing}.}
    \label{fig:compare_segments_50salads_big}
\end{figure}

\begin{figure}
    \centering
    \includegraphics[width=0.9\linewidth]{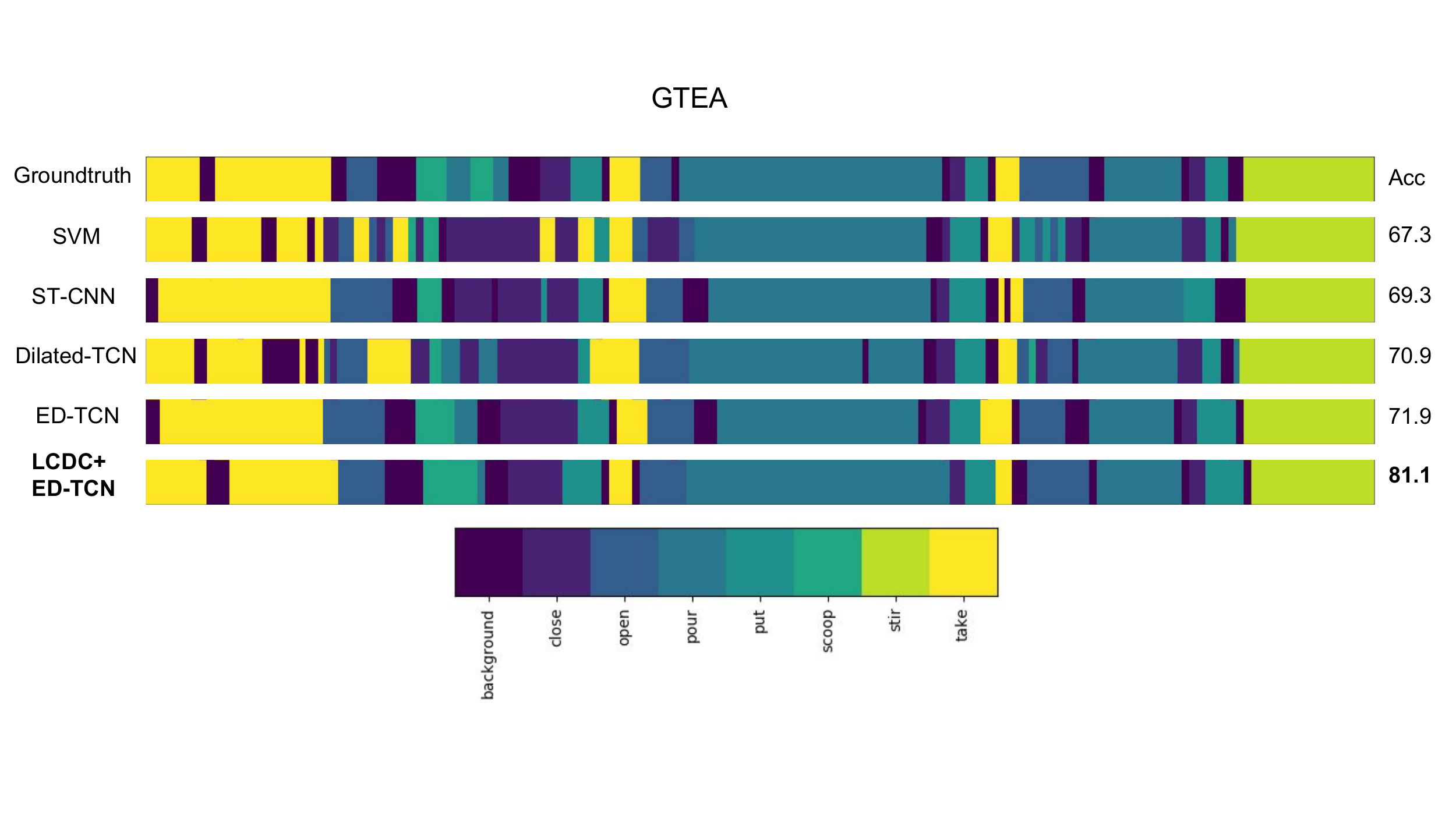}
    \vspace{-2mm}
    \caption{Comparison of segmentation results across different methods on a test video from GTEA dataset. The action sequence is: \textit{take, background, take, background, open, background, scoop, pour, scoop, pour, background, close, put, background, take, open, background, pour, background, close, put, background, take, open, background, pour, background, close, put, background, stir}.}
    \label{fig:compare_segments_gtea_big}
\end{figure}

\end{document}